\def\eqref#1{equation~\ref{#1}}
\def\1{\bm{1}}
\def\eps{{\epsilon}}
\def\rve{{\mathbf{e}}}
\def\rmD{{\mathbf{D}}}
\def\rmX{{\mathbf{X}}}
\def\rmY{{\mathbf{Y}}}
\DeclareMathAlphabet{\mathsfit}{\encodingdefault}{\sfdefault}{m}{sl}
\SetMathAlphabet{\mathsfit}{bold}{\encodingdefault}{\sfdefault}{bx}{n}
\newcommand{\R}{\mathbb{R}}
\definecolor{nicered}{HTML}{c73912}
\definecolor{niceblue}{HTML}{2F80ED}
\definecolor{nicegreen}{HTML}{0e9e36}
\newcommand{\red}{\textcolor{nicered}}
\newcommand{\KL}[2]{D_{\mathrm{KL}} \bigl( #1 ~||~ #2 \bigr)}
\DeclareMathOperator*{\E}{\scaleobj{1.1}{\mathbb{E}}}
\renewcommand{\eps}{\varepsilon}
\newtheorem{theorem}{Theorem}
\newtheorem{definition}[theorem]{Definition}
\definecolor{col_a}{HTML}{FFE5E6}
\definecolor{col_b}{HTML}{E5F3FF}
\definecolor{col_c}{HTML}{E6FFE5}
\newcolumntype{a}{>{\columncolor{col_a}} r}
\newcolumntype{b}{>{\columncolor{col_b}} r}
\newcolumntype{c}{>{\columncolor{col_c}} r}
\icmltitlerunning{Evaluating representations by the complexity of learning low-loss predictors}
\begin{document}

\twocolumn[
\icmltitle{Evaluating representations by the complexity of learning low-loss predictors}



\icmlsetsymbol{equal}{*}

\begin{icmlauthorlist}
\icmlauthor{William F. Whitney}{nyu}
\icmlauthor{Min Jae Song}{nyu}
\icmlauthor{David Brandfonbrener}{nyu}
\icmlauthor{Jaan Altosaar}{col}
\icmlauthor{Kyunghyun Cho}{nyu}
\end{icmlauthorlist}

\icmlaffiliation{nyu}{Courant Institute, New York University}
\icmlaffiliation{col}{Columbia University}

\icmlcorrespondingauthor{Will Whitney}{wwhitney@cs.nyu.edu}

\icmlkeywords{Machine Learning, ICML, representation learning, representation evaluation, mutual information, minimum description length}

\vskip 0.3in
]



\printAffiliationsAndNotice{\icmlEqualContribution} 



\begin{abstract}
We consider the problem of evaluating representations of data for use in solving a downstream task.
We propose to measure the quality of a representation by the complexity of learning a predictor on top of the representation that achieves low loss on a task of interest.
To this end, we introduce two measures: surplus description length (SDL) and $\eps$ sample complexity ($\eps$SC).
To compare our methods to prior work, we also present a framework based on plotting the validation loss versus evaluation dataset size (the ``loss-data'' curve).
Existing measures, such as mutual information and minimum description length, correspond to slices and integrals along the data axis of the loss-data curve, while ours correspond to slices and integrals along the loss axis.
This analysis shows that prior methods measure properties of an evaluation dataset of a specified size, whereas our methods measure properties of a predictor with a specified loss.
We conclude with experiments on real data to compare the behavior of these methods over datasets of varying size.


\end{abstract}

\section{Introduction}

One of the first steps in building a machine learning system is selecting a representation of data.
Whereas classical machine learning pipelines often begin with feature engineering, the advent of deep learning has led many to argue for pure end-to-end learning where the deep network constructs the features \citep{lecun2015deep}.
However, huge strides in unsupervised learning \citep{Hnaff2020DataEfficientIR,Chen2020ASF, he2019momentum, Oord2018RepresentationLW, bachman2019learning, devlin2018bert, liu2019roberta, raffel2019exploring, gpt3} have led to a reversal of this trend in the past two years, with common wisdom now recommending that the design of most systems start from a pretrained representation.
With this boom in representation learning techniques, practitioners and representation researchers alike have the question: Which representation is best for my task?

\begin{figure}[h]
\centering
\includegraphics[width=0.48\textwidth]{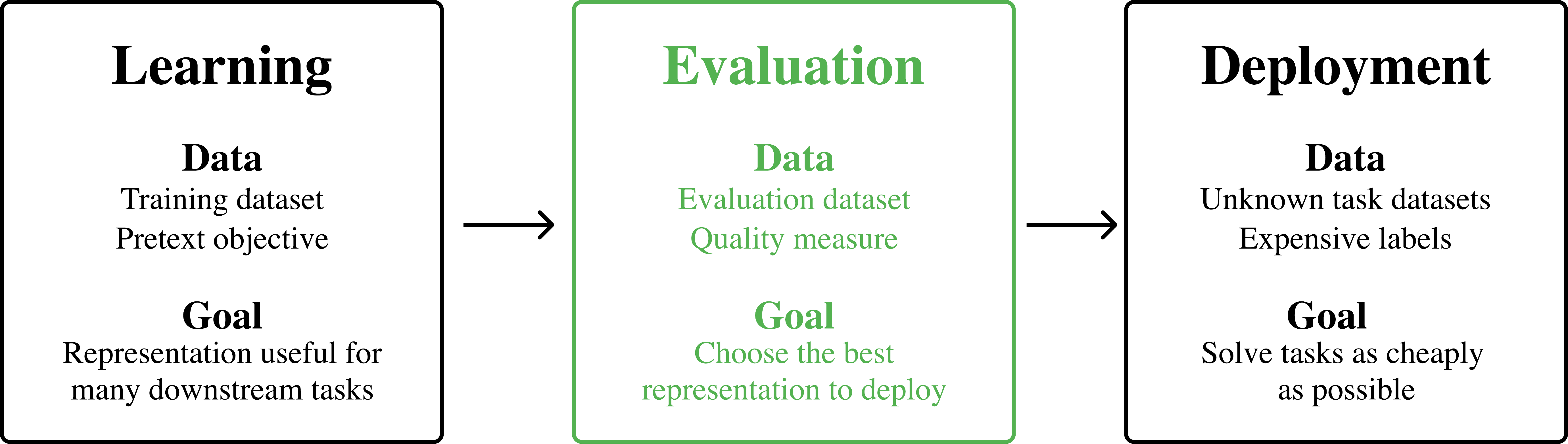}
\caption{The representation learning pipeline.}
\label{fig:representation_pipeline}
\end{figure}

This question exists as the middle step of the representation learning pipeline shown in \Cref{fig:representation_pipeline}.
The first step is representation learning, which consists of training a representation function on a training set using a pretext objective, which may be supervised or unsupervised.
The second step, which this paper considers, is representation evaluation.
In this step, one uses a measure of representation quality and a labeled \emph{evaluation dataset} to see how well the representation performs.
The final step is deployment, in which the practitioner or researcher puts the learned representation to use.
Deployment could involve using the representation on a stream of user-provided data to solve a variety of end tasks \citep{embedtheworld}, or simply releasing the trained weights of the representation function for general use.
In the same way that BERT \citep{devlin2018bert} representations have been applied to a whole host of problems, the task or amount of data available in deployment might differ from the evaluation phase.


We take the position that the best representation is the one which allows for the most \emph{efficient} learning of a predictor to solve the task.
We will measure efficiency in terms of either number of samples or information about the optimal predictor contained in the samples.
This position is motivated by practical concerns; the more labels that are needed to solve a task in the deployment phase, the more expensive to use and the less widely applicable a representation will be.
To date the field has lacked clearly defined and motivated tools for analyzing the complexity of learning with a given representation.
This work seeks to
provide those tools for the representation learning community.

\begin{figure*}[t]
\begin{subfigure}[b]{.33\textwidth}
  \centering
  \includegraphics[height=130px]{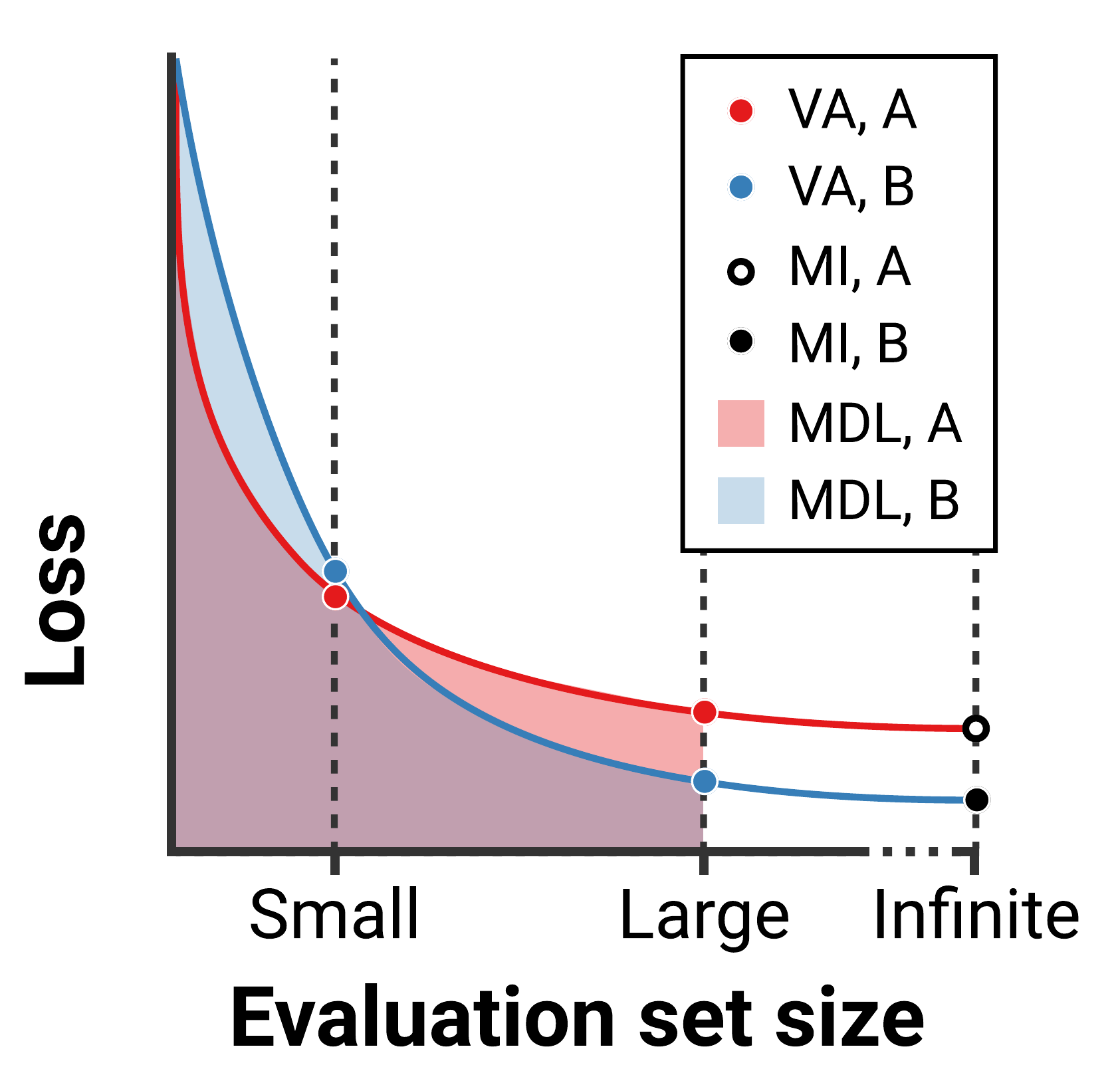}
  \caption{Existing measures}
  \label{fig:cartoon_left}
\end{subfigure}
\begin{subfigure}[b]{.32\textwidth}
  \centering
  \includegraphics[height=130px]{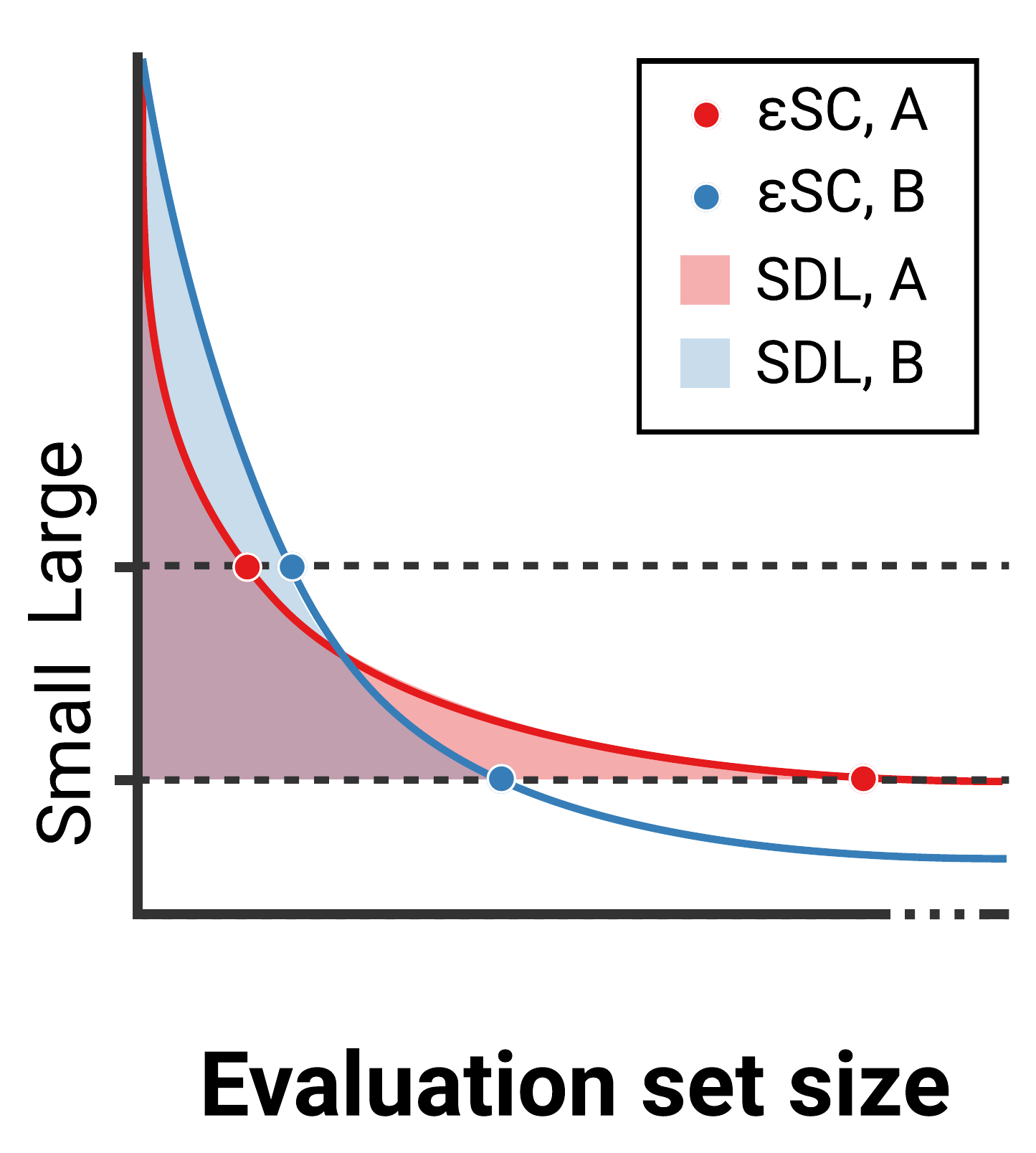}
  \caption{Proposed measures}
  \label{fig:cartoon_right}
\end{subfigure}
\begin{subfigure}[b]{.32\textwidth}
  \centering
  \includegraphics[height=130px]{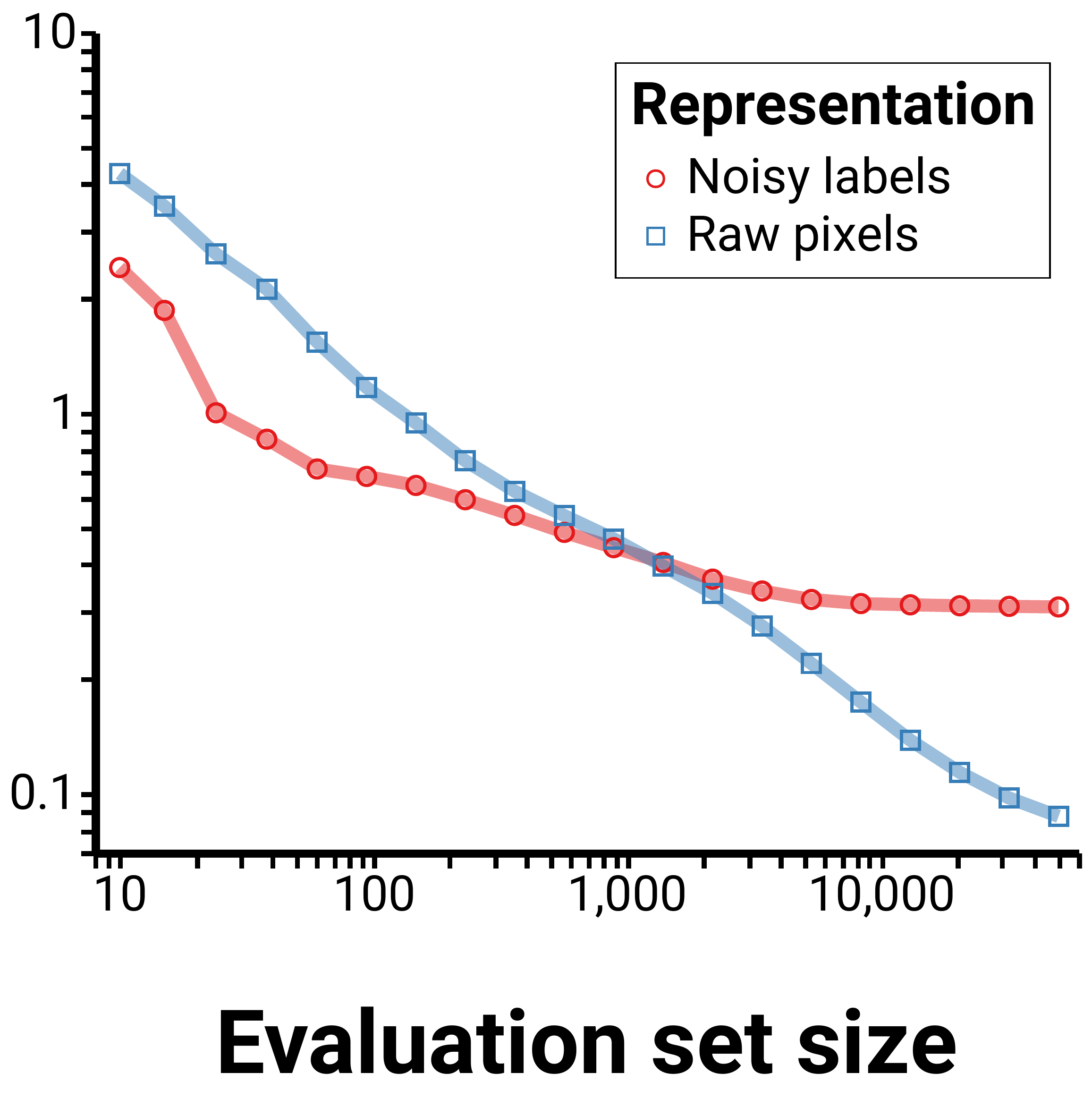}
  \caption{Illustrative experiment}
  \label{fig:noisygt}
\end{subfigure}
\caption{
Each measure for evaluating representation quality is a simple function of the ``loss-data'' curve shown here, which plots validation loss of a probe against evaluation dataset size.
\textbf{Left:} Validation accuracy (VA), mutual information (MI), and minimum description length (MDL) measure properties of a given evaluation dataset, with VA measuring the loss at a finite amount of evaluation data, MI measuring it at infinity, and MDL integrating it from zero to $n$.
This dependence on evaluation dataset size can lead to misleading conclusions as the amount of available data changes.
\textbf{Middle:} Our proposed methods instead measure the complexity of learning a predictor with a particular loss tolerance.
$ \eps$ sample complexity ($\eps$SC) measures the number of samples required to reach that loss tolerance, while surplus description length (SDL) integrates the surplus loss incurred above that tolerance.
Neither depends on the evaluation dataset size.
\textbf{Right:} A simple example task which illustrates the issue.
One representation, which consists of noisy labels, allows quick learning, while the other supports low loss in the limit of data.
Evaluating either representation at a particular evaluation dataset size risks drawing the wrong conclusion.
}
\label{fig:fig}
\end{figure*}


We build on a substantial and growing body of literature that attempts to answer the question of which representation is best.
Simple, traditional means of evaluating representations, such as the validation accuracy of linear probes \citep{ettinger2016probing,Shi2016String,Alain2016Understanding}, have been widely criticized \citep{Hnaff2020DataEfficientIR,resnick2019probing}.
Instead, researchers have taken up a variety of alternatives such as the validation accuracy (VA) of nonlinear probes \citep{Conneau2018Cram,Hnaff2020DataEfficientIR}, mutual information (MI) between representations and labels \citep{bachman2019learning, Pimentel2020InformationTheoreticPF}, and minimum description length (MDL) of the labels conditioned on the representations \citep{Blier2018TheDL,Yogatama2019LinguisticIntel,Voita2020InformationTheoreticPW}.

We find that these methods all have clear limitations.
As can be seen in \Cref{fig:fig}, VA and MDL are liable to choose different representations for the same task when given evaluation datasets of different sizes.
Instead we want an evaluation measure which depends on the data \emph{distribution}, not a particular evaluation dataset sample or evaluation dataset size.
Furthermore, VA and MDL lack a predefined notion of success in solving a task.
In combination with small evaluation datasets, these measures may lead to premature evaluation by producing a judgement even when there is not enough data to solve the task or meaningfully distinguish one representation from another.
Meanwhile, MI measures the lowest loss achievable by any predictor irrespective of the number of samples required to learn it or the computational cost to compute it.
None of these existing techniques measure the improved data efficiency that a good representation can yield, despite this being one of the primary applications for representation learning.



To eliminate these issues, we propose two measures of representation quality.
In both of our measures, the user specifies a tolerance $\eps$ so that a population loss of less than $\eps$ qualifies as solving the task.
Then the measure computes the cost of learning a predictor which achieves that loss.
The first measure is the \emph{surplus description length} (SDL) which modifies the MDL to measure the complexity of learning an $\eps$-loss predictor rather than computing the complexity of the labels in the evaluation dataset.
The second is the \emph{$\eps$-sample complexity} ($\eps$SC) which measures the sample complexity of learning an $\eps$-loss predictor.
These measures resolve the issues with prior work and provide tools for researchers and practitioners to evaluate the extent to which a learned representation can improve data efficiency.
Furthermore, they formalize existing research challenges for learning representations which allow state of the art performance while using as few labels as possible (e.g.~\citet{Hnaff2020DataEfficientIR}).

To facilitate our analysis, we also propose a framework called the \emph{loss-data framework}, illustrated in \Cref{fig:fig}, that plots the validation loss against the evaluation dataset size \citep{Talmor2019oLMpics, Yogatama2019LinguisticIntel, Voita2020InformationTheoreticPW}.
This framework simplifies comparisons between measures.
Prior work measures integrals (MDL) and slices (VA and MI) along the data axis.
Our work proposes instead measuring integrals (SDL) and slices ($\eps$SC) along the loss axis.
This illustrates how prior work makes tacit choices about the function to learn based on the choice of evaluation dataset size.
Our work instead makes an explicit, interpretable choice of what function to learn via the threshold $\eps$ and measures the complexity of learning such a function.
We experimentally investigate the behavior of these methods, illustrating the sensitivity of VA and MDL, and the robustness of SDL and $\eps$SC, to evaluation dataset size.

\paragraph{Efficient implementation.} To enable reproducible representation evaluation for representation researchers, we have developed a highly optimized open source Python package (see supplementary materials).
This package enables construction of loss-data curves with arbitrary representations and datasets and is library-agnostic, supporting representations and learning algorithms implemented in any Python ML library.
By leveraging the JAX library \citep{jax2018github} to parallelize the training of probes on a single accelerator, our package constructs loss-data curves in around two minutes on one GPU.

\section{The loss-data framework for representation evaluation}


In this section we formally present the representation evaluation problem, define our loss-data framework, and show how prior work fits into the framework.

\paragraph{Notation.}
We use bold letters to denote random variables.
A supervised learning problem is defined by a joint distribution $ \mathcal{D}$ over
observations and labels $(\rmX, \rmY)$ in the sample space $ \mathcal{X} \times \mathcal{Y}$ with density denoted by $ p$.
Let the random variable $\rmD^n$ be a sample of $n$ i.i.d. $(\rmX, \rmY)$ pairs, realized by $D^n = (X^n, Y^n) = \{(x_i, y_i)\}_{i=1}^n$.
This is the evaluation dataset.
Let $ \mathcal{R}$ denote a representation space and $\phi: \mathcal{X} \to \mathcal{R}$ a representation function.
The methods we consider all use parametric probes, which are neural networks $\hat{p}_\theta: \mathcal{R} \to P(\mathcal{Y})$ parameterized by $ \theta\in \R^d$ that are trained on $D^n$ to estimate the conditional distribution $p(y \mid x)$.
We often abstract away the details of learning the probe by simply referring to an algorithm $\mathcal{A}$ which returns a predictor: $\hat{p} = \mathcal{A}(\phi(D^n))$. Abusing notation, we denote the composition of $ \mathcal{A}$ with $\phi$ by $\mathcal{A}_\phi$.
Define the population loss and the expected population loss for $\hat{p} = \mathcal{A}_\phi(D^n)$, respectively as
\begin{align}
    L(\mathcal{A}_\phi, D^n) &= \E_{(\rmX, \rmY)} - \log \hat{p}(\rmY \mid \rmX) \label{eq:loss_data} \\
    L(\mathcal{A}_\phi, n) &= \E_{\rmD^n} L(\mathcal{A}_\phi, \rmD^n). 
\end{align}
The expected population loss averages over evaluation dataset samples, removing the variance that comes from using some particular evaluation dataset to train a probe.
In this section we will focus on population quantities, but note that any algorithmic implementation must replace these by their empirical counterparts.

\paragraph{The representation evaluation problem.} The representation evaluation problem asks us to define a real-valued measurement of the quality of a representation $ \phi$ for solving solving the task defined by $(\rmX, \rmY)$. Explicitly, each method defines a real-valued function $ m(\phi, \mathcal{D}, \mathcal{A}, \Psi)$ of a representation $\phi$, data distribution $ \mathcal{D}$, probing algorithm $ \mathcal{A}$, and some method-specific set of hyperparameters $ \Psi $. By convention, smaller values of the measure $ m $ correspond to better representations.
Defining such a measurement allows us to compare different representations.



\subsection{Defining the loss-data framework.}
The loss-data framework is a lens through which we contrast different measures  of representation quality. The key idea, demonstrated in \Cref{fig:fig}, is to plot the loss $ L(\mathcal{A}_\phi, n) $ against the evaluation dataset size $ n$.
Explicitly, at each $ n$, we train a probing algorithm $ \mathcal{A}$ using a representation $ \phi$ to produce a predictor $ \hat p$, and then plot the loss of $ \hat p$ against $ n$.
Similar analysis has appeared in \citet{Voita2020InformationTheoreticPW, Yogatama2019LinguisticIntel, Talmor2019oLMpics}.
We can represent each of the prior measures as points on the curve at fixed $ x $ (VA, MI) or integrals of the curve along the $ x$-axis (MDL).
Our measures correspond to evaluating points at fixed $ y $ ($\eps$SC) and integrals along the $ y$-axis (SDL).

\subsection{Existing methods in the loss-data framework} \label{sec:existing_methods}

\paragraph{Nonlinear probes with limited data.}


A simple strategy for evaluating representations is to choose a probe architecture and train it on a limited amount of data from the task and representation of interest \citep{Hnaff2020DataEfficientIR, Zhang2018LanguageMT}.
Each representation is typically scored by its validation accuracy, leading us to call this the validation accuracy (VA) measure.
This method can be interpreted in our framework by replacing the validation accuracy with the validation loss and taking an expectation over draws of evaluation datasets of size $n$.
On the loss-data curve, this measure corresponds to evaluation at $x=n$, so that
\begin{align}
    m_{\mathrm{VA}}(\phi, \mathcal{D}, \mathcal{A}, n) = L(\mathcal{A}_\phi, n).
\end{align}




\paragraph{Mutual information.}
Mutual information (MI) between a representation $\phi(\rmX)$ and targets $\rmY$ is another often-proposed metric for learning and evaluating representations \citep{Pimentel2020InformationTheoreticPF, bachman2019learning}.
In terms of entropy, mutual information is equivalent to the information gain about $\rmY$ from knowing $\phi(\rmX)$:
\begin{align}
    I(\phi(\rmX); \rmY) = H(\rmY) - H(\rmY \mid \phi(\rmX)).
\end{align}
In general mutual information is intractable to estimate for high-dimensional or continuous-valued variables \citep{McAllester2018FormalLO}, and a common approach is to use a very expressive model for $\hat{p}$ and maximize a variational lower bound:
\begin{align}
    I(\phi(\rmX); \rmY) &\ge H(\rmY) + \E_{(\rmX, \rmY)} \log \hat{p}(\rmY \mid \phi(\rmX)).
\end{align}
Since $H(\rmY)$ is not a function of the parameters, maximizing the lower bound is equivalent to minimizing the negative log-likelihood.
Moreover, if we assume that $ \hat p$ is expressive enough to represent $ p$ and take $ n \to \infty$, this inequality becomes tight.
As such, MI estimation can be seen a special case of nonlinear probes as described above, where instead of choosing some particular setting of $n$ we push it to infinity. We formally define the mutual information measure of a representation as
\begin{align}
    m_{\mathrm{MI}}(\phi, \mathcal{D}, \mathcal{A}) = \lim_{n\to \infty} L(\mathcal{A}_\phi, n).
\end{align}
A decrease in this measure reflects an increase in the mutual information.
On the loss-data curve, this corresponds to evaluation at $ x= \infty$.

\paragraph{Minimum description length.}
Recent studies \citep{Yogatama2019LinguisticIntel,Voita2020InformationTheoreticPW} propose using the Minimum Description Length (MDL) principle \citep{Rissanen1978ModelingBS,Grnwald2004ATI} to evaluate representations.
These works use an online or prequential code \citep{Blier2018TheDL} to encode the labels given the representations. The codelength $ \ell$ of $ Y^n $ given $ \phi(X^n) $ is then defined as
\begin{align}
    \ell(Y^n \mid \phi(X^n)) = - \sum_{i=1}^n \log \hat{p}_i(y_{i} \mid \phi(x_{i})),
\end{align}
where $\hat{p}_i$ is the output of running a pre-specified algorithm $\mathcal{A}$ on the evaluation dataset up to element $i$: $\hat{p}_i = \mathcal{A}_\phi(X^n_{1:i}, Y^n_{1:i})$.
This measure can exhibit large variance on small evaluation datasets, especially since it is sensitive to the (random) order in which the examples are presented.
We remove this variance by taking an expectation over the sampled evaluation datasets for each $i$ and define a population variant of the MDL measure~\citep{Voita2020InformationTheoreticPW} as
\begin{align} \label{eq:mdl_expected}
    m_{\mathrm{MDL}}(\phi, \mathcal{D}, \mathcal{A},n) = \E \Big[ \ell(\rmY^n \mid \phi(\rmX^n)) \Big] = \sum_{i=1}^n L(\mathcal{A}, i).
\end{align}
Thus, $m_\mathrm{MDL}$ measures the area under the loss-data curve on the interval $x \in [0, n]$.

\section{Limitations of existing methods}

Each of the prior methods, VA, MDL, and MI, have limitations that we attempt to solve with our methods. In this section we present these limitations.

\subsection{Sensitivity to evaluation set size in VA and MDL}
As seen in \Cref{sec:existing_methods}, the representation quality measures of VA and MDL
both depend on $n$, the size of the evaluation dataset.
Because of this dependence, the ranking of representations given by these evaluation metrics can change as $n$ increases.
Choosing to deploy one representation rather than another by comparing these metrics at arbitrary $n$ may lead to premature decisions in the machine learning pipeline since a larger  evaluation dataset could give a different ordering.


\paragraph{A theoretical example.}
\label{sec:counter_ex}
Let $s \in \{0,1\}^d$ be a fixed binary vector and consider a data generation process where the $\{0,1\}$ label of a data point is given by the parity on $s$, i.e., $y_i = \langle x_i, s \rangle \bmod{2}$ where $y_i \in \{0,1\}$ and $x_i \in \{0,1\}^d$. Let $Y^n = \{y_i\}_{i=1}^n$ be the given labels and consider the following two representations: (1) Noisy label: $z_i = \langle x_i, s \rangle + e_i \bmod{2}$, where $e_i \in \{0,1\}$ is a random bit with bias $\alpha < 1/2$, and (2) Raw data: $x_i$.

For the noisy label representation, guessing $y_i = z_i$ achieves validation accuracy of $1-\alpha$ for any $n$, which, is information-theoretically optimal. On the other hand, the raw data representation will achieve perfect validation accuracy once the evaluation dataset contains $d$ linearly independent $x_i$'s. In this case, Gaussian elimination will exactly recover $s$. The probability that a set of $n > d$ random vectors in $\{0,1\}^d$ does not contain $d$ linearly independent vectors decreases exponentially in $n-d$. Hence, the expected validation accuracy for $n$ sufficiently larger than $d$ will be exponentially close to 1. As a result, the representation ranking given by validation accuracy and description length favors the noisy label representation when $n \ll d$, but the raw data representation will be much better in these metrics when $n \gg d$. This can be misleading.
Although this is a concocted example for illustration purposes, our experiments in \Cref{sec:experiments} validate that dependence of representation rankings on $n$ does occur in practice.

\subsection{Insensitivity to representation quality  \& computational complexity in MI}
\label{sec:mutual_info}
MI considers the lowest validation loss achievable with the given representation and ignores any concerns about statistical or computational complexity of achieving such accuracy.
This leads to some counterintuitive properties which make MI an undesirable metric:
\begin{enumerate}
    \item MI is insensitive to statistical complexity. Two random variables which are perfectly predictive of one another have maximal MI, though their relationship may be sufficiently complex that it requires exponentially many samples to verify~\citep{McAllester2018FormalLO}.
    \item MI is insensitive to computational complexity. For example, the mutual information between an intercepted encrypted message and the enemy's plan is high~\citep{Shannon1948TheMT, Xu2020ATheory}, despite the extreme computational cost required to break the encryption.
    \item MI is insensitive to representation. By the data processing inequality \citep{Cover2006ElementsOI}, \emph{any} $\phi$ applied to $\rmX$ can only decrease its mutual information with $\rmY$; no matter the query, MI always reports that the raw data is at least as good as the best representation.
\end{enumerate}

\subsection{Lack of a predefined notion of success}
All three prior methods lack a predefined notion of successfully solving a task and
will always return some ordering of representations.
When the evaluation dataset is too small or all of the representations are poor, it may be that no representation can yet solve the task (i.e. achieve a useful accuracy).
Since the order of representations can change as more data is added, any judgement would be premature.
Indeed, there is often an implicit minimum requirement for the loss a representation should achieve to be considered meaningful.
As we show in the next section, our methods makes this requirement explicit.


\section{Surplus description length \& $\eps$ sample complexity}



The methods discussed above measure a property of the data, such as the attainable accuracy on $n$ points, by learning an unspecified function.
Instead, we propose to precisely define the function of interest and measure its complexity using data.
Fundamentally, we shift from making a statement about the inputs of an algorithm, like VA and MDL do, to a statement about the outputs.






\subsection{Surplus description length (SDL)}
Imagine trying to efficiently encode a large number of samples of a random variable $\rve$ which takes values in $\{1 \ldots K\}$ with probability $p(\rve)$.
An optimal code for these events has expected length\footnote{in nats} $\E[\ell(\rve)] = \E_{\rve} [ - \log p(\rve)] = H(\rve)$.
If this data is instead encoded using a probability distribution $\hat p$, the expected length becomes $H(\rve) + \KL{p}{\hat p}$.
We call $\KL{p}{\hat p}$ the \emph{surplus description length} (SDL) from encoding according to $\hat p$ instead of $p$:
\begin{align}
    \KL{p}{\hat p}
    = \E_{\rve \sim p} \left[ \log p(\rve) - \log \hat p(\rve) \right].
\end{align}
When the true distribution $p$ is a delta, the entire length of a code under $\hat p$ is surplus since $\log 1 = 0$.

Recall that the prequential code for estimating MDL computes the description length of the labels given observations in an evaluation dataset by iteratively creating tighter approximations $\hat p_{1} \ldots \hat p_{n}$ and integrating the area under the curve.
Examining \Cref{eq:mdl_expected}, we see that
\begin{align}
     m_{\mathrm{MDL}}(\phi, \mathcal{D}, \mathcal{A},n) = \sum_{i=1}^n L(\mathcal{A}_\phi, i) \geq \sum_{i=1}^n H(\rmY \mid \phi(\rmX)).
\end{align}

If $H(\rmY \mid \phi(\rmX)) > 0$, MDL grows without bound as the size of the evaluation dataset $n$ increases.

Instead, we propose to measure the complexity of a learned predictor $p(\rmY \mid \phi(\rmX))$ by computing the surplus description length of encoding an infinite stream of data according to the online code instead of the true conditional distribution.
\begin{definition}[Surplus description length of online codes] \label{def:sdl_entropy}
    Given random variables $\rmX, \rmY \sim \mathcal{D}$, a representation function $ \phi$, and a learning algorithm $\mathcal{A}$, define
    \begin{align}
        m_{\mathrm{SDL}}(\phi, \mathcal{D}, \mathcal{A}) &=
        \sum_{i=1}^\infty \Big[ L(\mathcal{A}_\phi, i) - H(\rmY \mid \rmX) \Big].
    \end{align}
\end{definition}
This surplus description length beyond the optimal code improves on MDL by being bounded.
However, as discussed in \Cref{sec:mutual_info} entropy is intractible to estimate, and in practice it is more relevant to measure the cost of learning a good enough predictor rather than a theoretically perfect one.

We generalize this definition to measure the complexity of learning an approximating conditional distribution with loss $\eps$.
This corresponds to the additional description length incurred by encoding data with the learning algorithm $\mathcal{A}$ rather than using a fixed predictor with loss $\eps$.
\begin{definition}[Surplus description length of online codes with a specified baseline] \label{def:sdl}
    Take random variables $\rmX, \rmY \sim \mathcal{D}$, a representation function $ \phi$, a learning algorithm $\mathcal{A}$, and a loss tolerance $\eps \ge H(\rmY \mid \rmX)$. Let $[c]_+$ denote $\max(0, c)$ and then we define
    \begin{align}
        m_{\mathrm{SDL}}(\phi, \mathcal{D}, \mathcal{A},\eps) &= \sum_{i=1}^\infty \Big[ L(\mathcal{A}_\phi, i) - \eps \Big]_+.
    \end{align}
\end{definition}
One interpretation of this measure is that it gives the cost (in terms of information) for re-creating an $\eps$-loss predictor when using the representation $\phi$.

In our framework, the surplus description length corresponds to computing the area between the loss-data curve and a baseline set by $y = \eps$.
Whereas MDL measures the complexity of a sample of $n$ points, SDL measures the complexity of a function which solves the task to $\eps$ tolerance.

\paragraph{Estimating the SDL.}
Naively computing SDL would require unbounded data and the estimation of $L(\mathcal{A}_\phi, i)$ for every $i$.
However, any reasonable learning algorithm obtains a better-generalizing predictor when given more i.i.d. data from the target distribution \citep{Kaplan2020ScalingLF}.
If we assume that algorithms are monotonically improving so that $L(\mathcal{A}, i+1) \le L(\mathcal{A}, i)$, SDL only depends on $i$ up to the first point where $L(\mathcal{A}, n) \le \eps$.
Approximating this integral can be done efficiently by taking a log-uniform partition of the evaluation dataset size and computing the Riemann sum as in \citet{Voita2020InformationTheoreticPW}.
Note that evaluating a representation only requires training probes, not the large representation functions themselves, and thus has modest computational requirements.
Crucially, if the tolerance $\eps$ is set unrealizeably low or the amount of available data is insufficient, an implementation is able to report that the given complexity estimate is only a lower bound.
In Appendix \ref{sec:sdl_details} we provide a detailed algorithm for estimating SDL and a theorem proving its data requirements, and the supplement includes an implementation.

\subsection{$\eps$ sample complexity ($\eps$SC)}

In addition to surplus description length we introduce a second, conceptually simpler measure of representation quality: $\eps$ sample complexity.

\begin{definition}[Sample complexity of an $\eps$-loss predictor] \label{def:sample_complexity}
    Given random variables $\rmX, \rmY \sim \mathcal{D}$, a representation function $ \phi$, a learning algorithm $\mathcal{A}$, and a loss tolerance $\eps \ge H(\rmY \mid \phi(\rmX))$, define
    \begin{align}
         m_{\eps\mathrm{SC}}(\phi, \mathcal{D}, \mathcal{A},\eps) &= \min \Big\{ n \in \mathbb{N} : L(\mathcal{A}_\phi, n) \leq \eps \Big\}.
    \end{align}
\end{definition}

The $\eps$ sample complexity measures the complexity of learning an $\eps$-loss predictor by the number of samples it takes
to find it.
This measure allows the comparison of two representations by first picking a target function to learn (via a setting of $\eps$), then measuring which representation enables learning that function with less data.

In our framework, sample complexity corresponds to taking a horizontal slice of the loss-data curve at $y = \eps$, analogous to VA's slice at $y=n$.
VA makes a statement about the data (by setting $n$) and reports the accuracy of some function given that data.
In contrast, $\eps$ sample complexity specifies the desired function and determines its complexity by how many samples are needed to learn it.

\paragraph{Estimating the $\eps$SC.}
Given an assumption that algorithms are monotonically improving such that $ L(\mathcal{A}, n+1) \le L(\mathcal{A}, n)$, $\eps$SC can be estimated efficiently.
With $n$ finite samples in the evaluation dataset, an algorithm may estimate $\eps$SC by splitting the data into $k$ uniform-sized bins and estimating $L(\mathcal{A}, \nicefrac{ik}{n})$ for $i \in \{1 \ldots k \}$.
By recursively performing this search on the interval which contains the transition from $L > \eps$ to $L < \eps$, we can rapidly reach a precise estimate or report that $m_{\eps\mathrm{SC}}(\phi, \mathcal{D}, \mathcal{A},\eps) > n$.
A more detailed examination of the algorithmic considerations of estimating $ \eps$SC is in Appendix \ref{sec:sc_details}, and an implementation is available in the supplement.

\paragraph{Using objectives other than negative log-likelihood.}
Our exposition of $\eps$SC uses negative log-likelihood for consistency with other methods, such as MDL, which require it.
However, it is straightforward to extend $\eps$SC to work with whatever objective function is desired under the assumption that said objective is monotone with increasing data when using algorithm $\mathcal{A}$.
A natural choice in many cases would be prediction accuracy, where a practitioner might target e.g. a 95\% accurate predictor.

\subsection{Setting $\eps$}

A value for the threshold $\eps$ corresponds to the set of $\eps$-loss predictors that a representation should make easy to learn.
Choices of $\eps \ge H(\rmY \mid \rmX)$ represent attainable functions, while selecting $\eps < H(\rmY \mid \rmX)$ leads to unbounded SDL and $\eps$SC for any choice of the algorithm $\mathcal{A}$.

For evaluating representation learning methods in the research community, we recommend using SDL and establishing benchmarks which specify (1) a downstream task, in the form of an evaluation dataset; (2) a criterion for success, in the form of a setting of $\eps$; (3) a standard probing algorithm $\mathcal{A}$.
The setting of $\eps$ can be done by training a large model on the raw representation of the full evaluation dataset and using its validation loss as $\eps$ when evaluating other representations.
This guarantees that $\eps \ge H(\rmY \mid \rmX)$ and the task is feasible with any representation at least as good as the raw data.
In turn, this ensures that SDL is bounded.

In practical applications, $\eps$ should be a part of the design specification for a system.
As an example, a practitioner might know that an object detection system with 80\% per-frame accuracy is sufficient and labels are expensive.
For this task, the best representation would be one which enables the most sample efficient learning of a predictor with error $\eps = 0.2$ using a 0~--~1 loss.

\section{Experiments} \label{sec:experiments}

We empirically show the behavior of VA, MDL, SDL, and $\eps$SC with two sets of experiments on real data.
These experiments have the following goals:
\begin{enumerate}
    \item Test whether the theoretical issue of sensitivity to evaluation dataset size for VA and MDL occurs in practice.
    \item Demonstrate that SDL and $\eps$SC produce lower bounds when insufficient data is available and concrete quantities otherwise.
    \item Evaluate whether the computation of SDL and $\eps$SC scales to large-scale tasks.
\end{enumerate}

\begin{figure}[t!]
\centering
\includegraphics[width=0.45\textwidth]{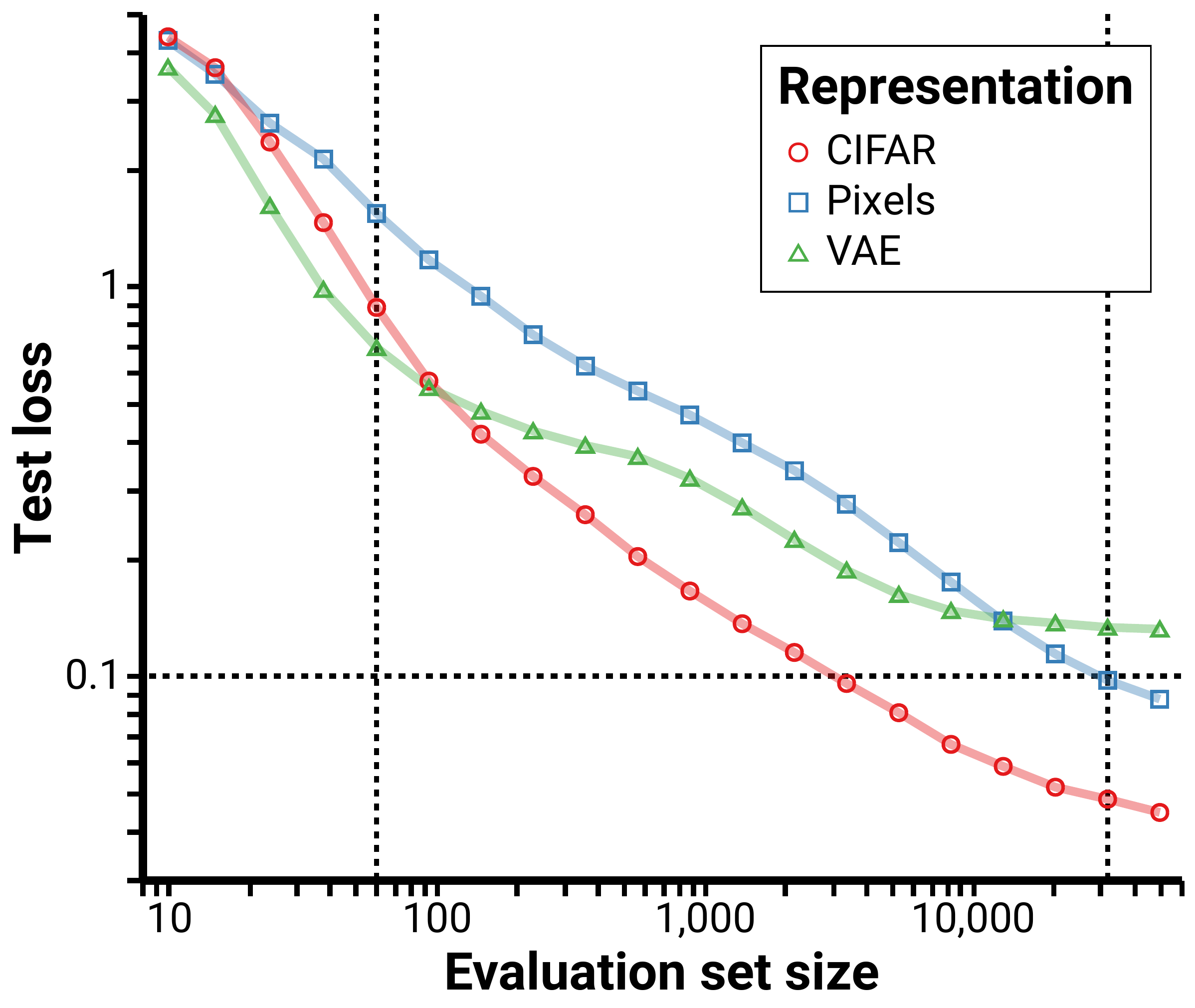}
  \caption{Results using three representations on MNIST.
  The intersections between curves indicate evaluation dataset sizes where VA would change its ranking of these representations. Curves are estimated using eight bootstrap-sampled evaluation datasets and initializations at each point to ensure the measured quantities are close to the expectation.}
  \label{fig:multi_mnist}
\end{figure}

\begin{table}[th]
    \centering
    {\small
\begin{tabular}{llabc}
\toprule
      & Representation &     CIFAR &    Pixels &      VAE \\
n & {} &           &           &          \\
\midrule
60    & VA &      0.88 &      1.54 &       \textbf{0.70} \\
      & MDL &    122.75 &    147.34 &       \textbf{93.8} \\
      & SDL, $\varepsilon$=0.1 &  > 116.75 &  > 141.34 &     > 87.8 \\
      & $\varepsilon$SC, $\varepsilon$=0.1 &    > 60.0 &    > 60.0 &     > 60.0 \\
31936 & VA &      \textbf{0.05} &      0.10 &       0.13 \\
      & MDL &    \textbf{2165.1} &   5001.57 &    4898.37 \\
      & SDL, $\varepsilon$=0.1 &     \textbf{260.6} &   1837.08 &  > 1704.77 \\
      & $\varepsilon$SC, $\varepsilon$=0.1 &      \textbf{3395} &     31936 &  > 31936.0 \\


\bottomrule
\end{tabular}
}
    \caption{Estimated measures of representation quality on MNIST. At small evaluation dataset sizes, VA and MDL state that the VAE representation is the best, even though every representation yields poor prediction quality with that amount of data. Since SDL and $\eps$SC have a target for prediction quality, they are able to report when the evaluation dataset is insufficient to achieve the desired performance.}
    \label{tab:multi_mnist}
\end{table}

\subsection{Tasks and representations}
For the first experiment, shown in \Cref{fig:multi_mnist} and \Cref{tab:multi_mnist}, we use the small-scale task of MNIST classification.
We evaluate three representations: (1) the last hidden layer of a small convolutional network pretrained on CIFAR-10; (2) the raw pixels; and (3) the bottleneck of a variational autoencoder (VAE) \citep{Kingma2014AutoEncodingVB,Rezende2014StochasticBA} trained on MNIST.

For the second experiment, shown in \Cref{fig:elmo_layers} and \Cref{tab:elmo_layers}, we compare the representations given by different layers of a pretrained ELMo model \citep{Peters2018DeepCW}.
We use the part-of-speech task introduced by \citet{Hewitt2019DesigningProbes} and implemented by \citet{Voita2020InformationTheoreticPW} with the same probe architecture and other hyperparameters as those works.
This leads to a large-scale representation evaluation task, with 4096-dimensional representation vectors and an output space of size $48^k$ for a sentence of $k$ words.

In each set of experiments we compute loss-data curves by estimating the expected population loss at each evaluation dataset size using a bootstrapped sample from the full evaluation dataset, reducing the variance of the results.
Note that in each experiment we omit MI as for a finite evaluation dataset, the MI measure is the same as validation loss.
Details of the experiments, including representation training, probe architectures, and hyperparameters, are available in Appendix \ref{sec:experiment_details}.

\subsection{Results}
These experiments demonstrate that the issue of sensitivity to evaluation dataset size in fact occurs in practice, both on small problems (\Cref{tab:multi_mnist}) and at scale (\Cref{tab:elmo_layers}): VA and MDL choose different representations when given evaluation sets of different sizes.
Because these measures are a function of the evaluation dataset size, making a decision about which representation to use with a small evaluation dataset would be premature.

By contrast, SDL and $\eps$SC are functions only of the data \emph{distribution}, not a finite sample.
Once they measure the complexity of learning an $\eps$-loss function, that measure is invariant to the size of the evaluation dataset.
Crucially, since these measures contain a notion of success in  solving a task, they are able to avoid the issue of premature evaluation and notify the user if there is insufficient data to evaluate and return a lower bound instead.

The part of speech experiment in \Cref{fig:elmo_layers} and \Cref{tab:elmo_layers} demonstrates that SDL and $\eps$SC can scale to tasks of a practically relevant size.
This experiment is of a similar size to the widespread use of BERT \citep{devlin2018bert} or SimCLR \citep{Chen2020ASF}, and evaluating our measures to high precision took about an hour on one GPU.


\begin{figure}[tb]
\centering
\includegraphics[width=0.45\textwidth]{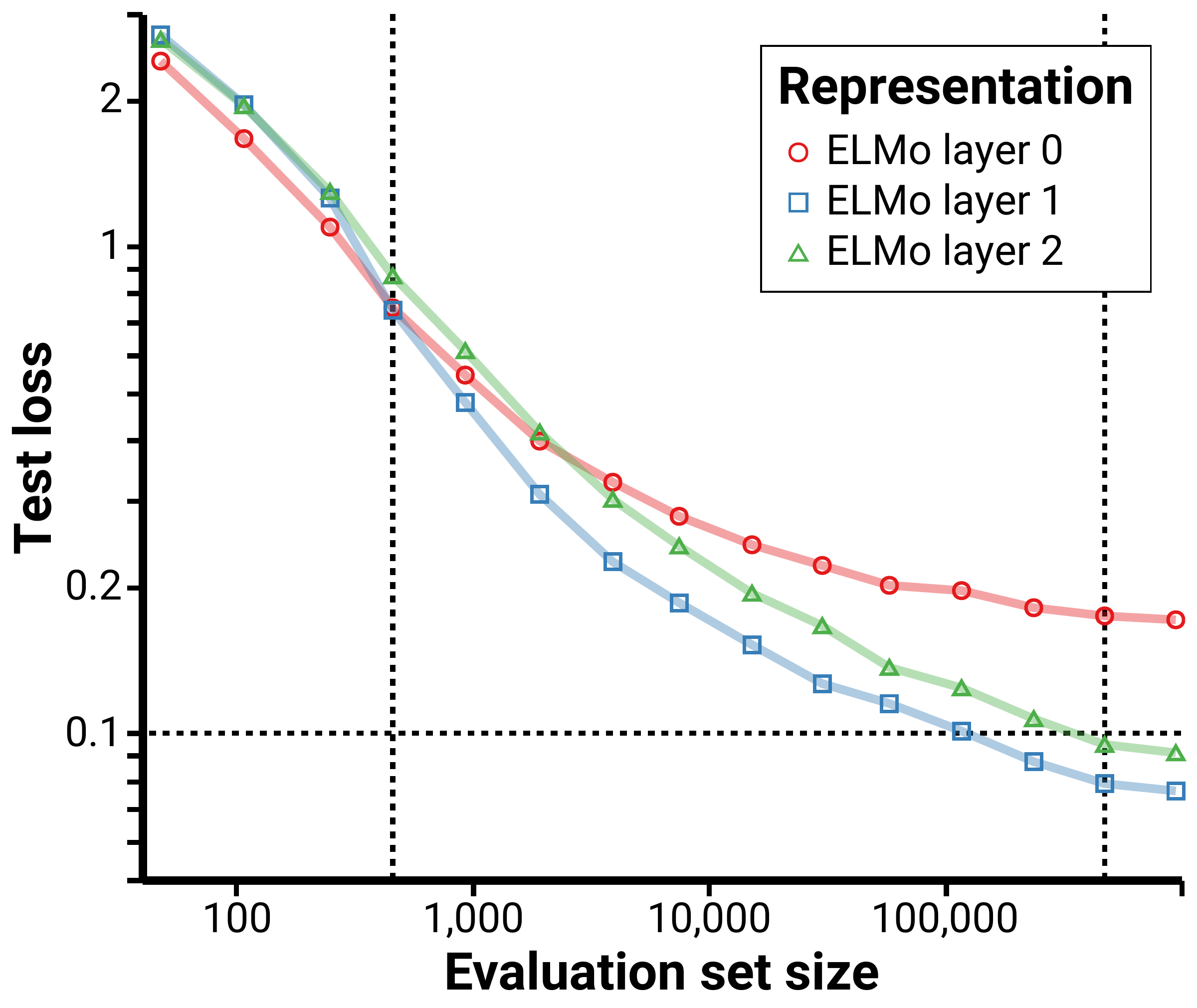}
\caption{Results using three representations on the part of speech classification task. Loss-data curves are estimated using four bootstrap-sampled evaluation datasets and network initializations at each point.}
\label{fig:elmo_layers}
\end{figure}

\begin{table}[tb]
    \centering
    {\small
\begin{tabular}{llabc}
\toprule
       & ELMo layer &           0 &         1 &         2 \\
n & {} &             &           &           \\
\midrule
461    & VA &        0.75 &      \textbf{0.74} &     0.87 \\
       & MDL &      \textbf{884.54} &   1009.26 &  1017.72 \\
       & SDL, $\varepsilon$=0.1 &    > 478.67 &  > 528.51 &  > 561.7 \\
       & $\varepsilon$SC, $\varepsilon$=0.1 &       > 461 &     > 461 &    > 461 \\
474838 & VA &        0.17 &      \textbf{0.08} &     0.09 \\
       & MDL &    92403.41 &  \textbf{52648.50} & 65468.54 \\
       & SDL, $\varepsilon$=0.1 &  > 40882.72 &   \textbf{2765.11} &  7069.56 \\
       & $\varepsilon$SC, $\varepsilon$=0.1 &    > 474838 &    \textbf{237967} &   474838 \\

\bottomrule
\end{tabular}
}
    \caption{Estimated measures of representation quality on the part of speech classification task. With small evaluation datasets, MDL finds that the lowest ELMo layer gives the best results, but when the evaluation dataset grows the outcome changes.}
    \label{tab:elmo_layers}
\end{table}


\section{Related work}

\paragraph{Representation evaluation methods.}
Until recently, the standard technique for evaluating representation quality was the use of linear probes \citep{Kiros2015SkipThoughtV,Hill2016LearningDR,Oord2018RepresentationLW,Chen2020ASF}.
However, \citet{Hnaff2020DataEfficientIR} find that evaluation with linear probes is largely uncorrelated with the more practically relevant objective of low-data accuracy, and \citet{resnick2019probing} show that linear probe performance does not predict performance for transfer across tasks.
Beyond linear probes, \citet{Zhang2018LanguageMT} and \citet{Hewitt2019DesigningProbes} show that restrictions on model capacity or evaluation dataset size are necessary to separate the performance of randomly- and linguistically-pretrained representations.
\citet{Voita2020InformationTheoreticPW} propose using the MDL framework, which
measures the description length of the labels given the observations.
An earlier work by~\citet{Yogatama2019LinguisticIntel} also uses prequential codes to evaluate representations for linguistic tasks.
\citet{Talmor2019oLMpics} look at the loss-data curve (called ``learning curve'' in their work) and use a weighted average of the validation loss at various training set sizes to evaluate representations.

\paragraph{Foundational work.}
A fundamental paper by \citet{Blier2018TheDL} introduces prequential codes as a measure of the complexity of a deep learning model.
\citet{Xu2020ATheory} introduce predictive $\mathcal{V}$-information, a theoretical generalization of mutual information which takes into account computational constraints,
and is essentially the mutual information lower bound often reported in practice.
Work by \citet{Dubois2020LearningOR} describe representations which, in combination with a specified family of predictive functions, have guarantees on their generalization performance.

\section{Discussion}

In this work, we have introduced the loss-data framework for comparing representation evaluation measures and used it to diagnose the issue of sensitivity to evaluation dataset size in the validation accuracy and minimum description length measures.
We proposed two measures, surplus description length and $\eps$ sample complexity, which eliminate this issue by measuring the complexity of learning a predictor which solves the task of interest to $\eps$ tolerance.
Empirically, we showed that sensitivity to evaluation dataset size occurs in practice for VA and MDL, while SDL and $\eps$SC are robust to the amount of available data and are able to report when it is insufficient to make a judgment.

Each of these measures depends on a choice of algorithm $\mathcal{A}$, including hyperparameters such as probe architecture, which could make the evaluation procedure less robust.
To alleviate this, future work might consider a set of algorithms $A = \{\mathcal{A}_i\}_{i=1}^K$ and a method of combining them, such as the model switching technique of \citet{Blier2018TheDL,Erven2012CatchingUF} or a Bayesian prior.

Finally, while existing measures such as VA, MI, and MDL do not measure our notion of the best representation for a task, under other settings they may be the correct choice.
For example, if only a fixed set of data will ever be available, selecting representations using VA might be a reasonable choice; and if unbounded data is available for free, perhaps MI is the most appropriate measure.
However, in many cases the robustness and interpretability offered by SDL and $\eps$SC make them a practical choice for practitioners and representation researchers alike.


\newpage

\bibliographystyle{note}
\bibliography{references}
\newpage
\onecolumn

\begin{appendices}
\section{Algorithmic details for estimating surplus description length} \label{sec:sdl_details}

Recall that the SDL is defined as
\begin{align}
        m_{\mathrm{SDL}}(\phi, \mathcal{D}, \mathcal{A},\eps) &= \sum_{n=1}^\infty \Big[ L(\mathcal{A}_\phi, n) - \eps \Big]_+
\end{align}
For simplicity, we assume that $L$ is bounded in $[0,1]$. Note that this can be achieved by truncating the cross-entropy loss.

\begin{algorithm}
\caption{Estimate surplus error}
\label{alg:sdl}
\KwIn{tolerance $\eps $, max iterations $M$, number of datasets $K$, representation $ \phi$, data distribution $\mathcal{D}$, algorithm $\mathcal{A}$ }
\KwOut{Estimate $\hat m $ of  $m(\phi, \mathcal{D}, \eps, \mathcal{A})$ and indicator $ I$ of whether this estimate is tight or lower bound}
\vspace{1mm} \hrule \vspace{1mm}
Sample $ K$ datasets $ D_M^{k}\sim \mathcal{D}$ of size $ M+1$\\
\For{$n = 1$ \KwTo $M$}{
    For each $ k \in [K]$, run $ \mathcal{A}$ on $ D_M^{k}[1:n]$ to produce a predictor $ \hat p_n^k$\\
    Take $ K $ test samples $ (x_k, y_k) = D_M^k[M+1]$\\
    Evaluate $ \hat L_n = \frac{1}{K}\sum_{k=1}^K \ell(\hat p_n^k, x_k , y_k) $
    }
Set $ \hat m = \sum_{n=1}^M [\hat L_n - \eps]_+$ \vspace{1mm}\\
\lIf {$ \hat L_M \leq \eps/2$} {Set $I = $ \texttt{tight} \textbf{else} {Set $ I = $ \texttt{lower bound}}}
\Return $\hat m, I$
\end{algorithm}

In our experiments we replace $D^k_M[1:n]$ of Algorithm \ref{alg:sdl} with sampled subsets of size $n$ from a single evaluation dataset.
Additionally, we use between 10 and 20 values of $n$ instead of evaluating $L(\mathcal{A}_\phi, n)$ at every integer between $1$ and $M$.
This strategy, also used by \citet{Blier2018TheDL} and \citet{Voita2020InformationTheoreticPW}, corresponds to the description length under a code which updates only periodically during transmission of the data instead of after every single point.

\begin{theorem}
Let the loss function $L$ be bounded in $[0,1]$ and assume that it is decreasing in $ n$. With $ (M+1)K $ datapoints, if the sample complexity is less than $ M$, the above algorithm returns an estimate $ \hat m$ such that with probability at least $ 1- \delta$
\begin{align}
    |\hat m - m(\phi, \mathcal{D}, \eps, \mathcal{A})| \leq  M\sqrt{ \frac{\log (2M/\delta)}{2K}}.
\end{align}
If $ K \geq \frac{\log(1/\delta)}{2\eps^2}$ and the algorithm returns \texttt{tight} then with probability at least $ 1-\delta$ the sample complexity is less than $ M $ and the above bound holds.
\end{theorem}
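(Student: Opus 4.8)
The plan is to derive both halves of the statement from a single concentration estimate. Fix $n \in [M]$: the estimate $\hat L_n = \frac1K\sum_{k=1}^K \ell(\hat p_n^k, x_k, y_k)$ is an average of $K$ independent random variables, each valued in $[0,1]$ by the boundedness assumption and each with mean exactly $L(\mathcal{A}_\phi, n)$. The mean identity holds because within a sampled dataset $D_M^k$ the test pair $D_M^k[M+1]$ is independent of the prefix $D_M^k[1:n]$ used to produce $\hat p_n^k$ (all $M+1$ points are i.i.d.), so $\mathbb{E}\,\ell(\hat p_n^k, x_k, y_k) = \mathbb{E}_{\rmD^n}\mathbb{E}_{(\rmX,\rmY)}\,\ell(\mathcal{A}_\phi(\rmD^n),\rmX,\rmY) = L(\mathcal{A}_\phi, n)$; independence across $k$ is immediate from the independent sampling of the $D_M^k$. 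Hoeffding then gives $\Pr[\,|\hat L_n - L(\mathcal{A}_\phi, n)| \ge t\,] \le 2e^{-2Kt^2}$ for each $n$, and a union bound over $n=1,\dots,M$ — which needs no independence across $n$, so reusing the same $K$ datasets is harmless — produces an event $E$ with $\Pr[E] \ge 1-\delta$ on which $|\hat L_n - L(\mathcal{A}_\phi, n)| \le t^\star := \sqrt{\log(2M/\delta)/(2K)}$ for all $n \in [M]$ simultaneously.

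For the first claim, I would work on $E$. Because the sample complexity is below $M$ and $L(\mathcal{A}_\phi,\cdot)$ is nonincreasing, $L(\mathcal{A}_\phi, n) \le \eps$ for every $n \ge M$, so all terms of $\sum_{n\ge 1}[L(\mathcal{A}_\phi, n)-\eps]_+$ with index $\ge M$ vanish and $m(\phi,\mathcal{D},\eps,\mathcal{A}) = \sum_{n=1}^{M}[L(\mathcal{A}_\phi, n)-\eps]_+$, which has the same index range as $\hat m$. Since $c\mapsto[c]_+$ is $1$-Lipschitz,
\begin{align}
|\hat m - m| \;\le\; \sum_{n=1}^M \big|\,[\hat L_n-\eps]_+ - [L(\mathcal{A}_\phi,n)-\eps]_+\,\big| \;\le\; \sum_{n=1}^M |\hat L_n - L(\mathcal{A}_\phi, n)| \;\le\; M t^\star,
\end{align}
which is the asserted bound.

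For the second claim, I would add a one-sided control at $n=M$: $\Pr[\,L(\mathcal{A}_\phi, M) - \hat L_M \ge s\,] \le e^{-2Ks^2}$, choosing $s$ of order $\eps$ so that the hypothesis $K \ge \log(1/\delta)/(2\eps^2)$ forces this probability below $\delta$. On that event, the certificate $\hat L_M \le \eps/2$ returned by the algorithm yields $L(\mathcal{A}_\phi, M) \le \hat L_M + s \le \eps$, and monotonicity then shows the sample complexity is at most $M$ — so the hypothesis of the first claim holds and, intersecting with $E$, the error bound applies as well (splitting $\delta$ between the two events). I expect the only real friction to be the constant bookkeeping in this last step: making the algorithm's threshold $\eps/2$, the Hoeffding slack $s$, and the stated lower bound on $K$ fit together so the two failure probabilities genuinely add up to $\delta$ — and, if one reads ``less than $M$'' literally rather than as ``at most $M$'', running the $n = M-1$ version of the same argument. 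Everything conceptual is just Hoeffding plus the $1$-Lipschitz truncation of the series, so no deeper obstacle should arise.
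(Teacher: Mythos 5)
Your proposal is correct and follows essentially the same route as the paper's proof: a Hoeffding bound on each $\hat L_n$, a union bound over $n=1,\dots,M$, the $1$-Lipschitz property of $[\cdot]_+$ to control $|\hat m - m|$, and a separate one-sided Hoeffding bound at $n=M$ to certify the \texttt{tight} flag. Your explicit remarks about the vanishing tail of the series, the need to split $\delta$ between the two events, and the constant bookkeeping around the $\eps/2$ threshold are all refinements the paper glosses over, but they do not change the argument.
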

\begin{proof}
First we apply a Hoeffding bound to show that each $ \hat L_n$ is estimated well. For any $ n$, we have
\begin{align}
    P \bigg( \big|\hat L_n   - L(\mathcal{A}_\phi,n)  \big| > \sqrt{\frac{\log(2M/\delta)}{2K}} \bigg) \leq 2 \exp\bigg(-2K  \frac{\log(2M/\delta)}{2K}\bigg) = 2 \frac{\delta}{2M} = \frac{\delta}{M}
\end{align}
since each $ \ell(\hat p_n^k, x_k , y_k)$ is an independent variable, bounded in [0,1] with expectation $ L(\mathcal{A}_\phi, n)$.

Now when sample complexity is less than $ M$, we use a union bound to translate this to a high probability bound on error of $ \hat m$, so that with probability at least $ 1- \delta$:
\begin{align}
    |\hat m - m(\phi, \mathcal{D}, \eps, \mathcal{A})| &= \bigg|\sum_{n=1}^M [\hat L_n - \eps]_+  - [L(\mathcal{A}_\phi,n) - \eps]_+  \bigg|\\
    &\leq \sum_{n=1}^M\bigg| [\hat L_n - \eps]_+  - [L(\mathcal{A}_\phi,n) - \eps]_+ \bigg|\\
    &\leq \sum_{n=1}^M \bigg|\hat L_n - L(\mathcal{A}_\phi,n) \bigg|\\
    &\leq M \sqrt{ \frac{\log (2M/\delta)}{2K}}
\end{align}
This gives us the first part of the claim.

We want to know that when the algorithm returns \texttt{tight}, the estimate can be trusted (i.e. that we set $ M $ large enough). Under the assumption of large enough $K$, and by an application of Hoeffding, we have that
\begin{align}
    P \bigg(  L(\mathcal{A}_\phi,M) - \hat L_M  > \eps/2 \bigg) \leq  \exp\bigg(-2K \eps^2 \bigg) \leq  \exp\bigg(-2 \frac{\log(1/\delta)}{2\eps^2} \eps^2 \bigg) = \delta
\end{align}
If $ \hat L_M \leq \eps/2$, this means that $ L(\mathcal{A}_\phi,M) \leq \eps$ with probability at least $ 1-\delta$. By the assumption of decreasing loss, this means the sample complexity is less than $ M$, so the bound on the error of $ \hat m$ holds.
\end{proof}

\section{Algorithmic details for estimating sample complexity} \label{sec:sc_details}
Recall that $\eps$ sample complexity ($\eps$SC) is defined as
\begin{align}
     m_{\eps\mathrm{SC}}(\phi, \mathcal{D}, \mathcal{A},\eps) &= \min \Big\{ n \in \mathbb{N} : L(\mathcal{A}_\phi, n) \leq \eps \Big\}.
\end{align}

We estimate $m_{\eps\mathrm{SC}}$ via recursive grid search. To be more precise, we first define a search interval $[1,N]$, where $N$ is a large enough number such that $L(\mathcal{A}_\phi,N) \ll \eps$. Then, we partition the search interval in to 10 sub-intervals and estimate risk of hypothesis learned from $D^n \sim \mathcal{D}^n$ with high confidence for each sub-interval. We then find the leftmost sub-interval that potentially contains $m_{\eps\mathrm{SC}}$ and proceed recursively. This procedure is formalized in Algorithm~\ref{alg:esc} and its guarantee is given by Theorem~\ref{thm:esc}.
\begin{algorithm}[h!]
\caption{Estimate sample complexity via recursive grid search}
\label{alg:esc}
\KwIn{Search upper limit $N$, parameters $\eps$, confidence parameter $\delta$, data distribution $\mathcal{D}$, and learning algorithm $\mathcal{A}$.}
\KwOut{Estimate $\hat{m}$ such that $m_{\eps\mathrm{SC}}(\phi,\mathcal{D},\mathcal{A},\eps) \le \hat m$ with probability $1-\delta$.}
\vspace{1mm} \hrule \vspace{1mm}
let $S = 2\log (20k/\delta)/\eps^2$, and let $[\ell,u]$ be the search interval initialized at $\ell = 1, u = N$.\\
\For{$r=1$ \KwTo $k$}{
    Partition $[\ell,u]$ into 10 equispaced bins and let $\Delta$ be the length of each bin. \\
    \For{$j = 1$ \KwTo $10$}{
        Set $n = \ell + j \Delta$. \\
        Compute $\hat L_n = \frac{1}{S}\sum_{i=1}^S \ell(\mathcal{A}(D^n_i),x_i,y_i)$ for $S$ independent draws of $D^n$ and test sample $(x,y)$. \\
        \If{$\hat L_n \le \eps/2$}{
        Set $u = n$ and $\ell = n - \Delta$. \\
        \textbf{break}
        }
        }
}
\Return $\hat m = u$, which satisfies $m_{\eps\mathrm{SC}}(\phi,\mathcal{D},\mathcal{A},\eps) \le \hat m$ with probability $1-\delta$, where the randomness is over independent draws of $D^n$ and test samples $(x,y)$.
\end{algorithm}

\begin{theorem}
\label{thm:esc}
Let the loss function $L$ be bounded in $[0,1]$ and assume that it is decreasing in $ n$. Then, Algorithm~\ref{alg:esc} returns an estimate $ \hat m$ that satisfies $m_{\eps\mathrm{SC}}(\phi,\mathcal{D},\mathcal{A},\eps) \le \hat m$ with probability at least $ 1- \delta$.
\end{theorem}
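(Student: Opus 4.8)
The plan is to isolate a single high-probability ``good event'' on which all empirical loss estimates produced by the algorithm are accurate to within $\eps/2$, and then argue deterministically on that event that the returned index $u$ has true expected loss at most $\eps$, so that the assumed monotonicity of $L(\mathcal{A}_\phi,\cdot)$ forces $m_{\eps\mathrm{SC}} \le \hat m$.

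\textbf{Concentration step.} Over its $k$ rounds, each consisting of at most ten bin-endpoint evaluations, the algorithm computes $\hat L_n$ for at most $10k$ values of $n$. Each such $\hat L_n$ is the mean of $S$ i.i.d.\ copies of the per-sample loss $\ell(\mathcal{A}(D^n_i), x_i, y_i)$, which lies in $[0,1]$ and has expectation $L(\mathcal{A}_\phi, n)$ (the population loss averaged over the draw of $D^n$). A two-sided Hoeffding inequality at deviation $\eps/2$ gives single-evaluation failure probability $2\exp(-S\eps^2/2)$; plugging in $S = 2\log(20k/\delta)/\eps^2$ makes this $\delta/(10k)$, and a union bound over the at most $10k$ evaluations yields an event $E$ with $P(E) \ge 1-\delta$ on which $|\hat L_n - L(\mathcal{A}_\phi, n)| \le \eps/2$ for every evaluated $n$ simultaneously.

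\textbf{Deterministic step on $E$.} Each time the algorithm executes the update $u \leftarrow n$ it does so because $\hat L_n \le \eps/2$, whence $L(\mathcal{A}_\phi, n) \le \hat L_n + \eps/2 \le \eps$ on $E$. Therefore the final value $u = \hat m$ is either such a point (so $L(\mathcal{A}_\phi, \hat m) \le \eps$) or, if no update ever fires, the initial search limit $N$, which by construction of the search range satisfies $L(\mathcal{A}_\phi, N) \le \eps$. In either case $L(\mathcal{A}_\phi, \hat m) \le \eps$, and since $L(\mathcal{A}_\phi, \cdot)$ is decreasing in $n$, the least $n$ with $L(\mathcal{A}_\phi, n) \le \eps$ --- namely $m_{\eps\mathrm{SC}}(\phi, \mathcal{D}, \mathcal{A}, \eps)$ --- is at most $\hat m$. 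Intersecting with $E$ gives the statement with probability at least $1-\delta$.

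I do not anticipate a genuine obstacle; the one point requiring care is that the guarantee is necessarily only one-sided. When the search breaks at $n$ it recurses on $[n-\Delta, n]$, but the true threshold can lie strictly below $n-\Delta$ (the estimate $\hat L_{n-\Delta}$ may exceed $\eps/2$ even when $L(\mathcal{A}_\phi, n-\Delta) \le \eps$), so the procedure may converge to a value strictly above $m_{\eps\mathrm{SC}}$. This is exactly why the theorem claims only $m_{\eps\mathrm{SC}} \le \hat m$ rather than a two-sided estimate as in the SDL analysis, and nothing in the argument needs to preclude it.
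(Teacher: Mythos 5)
Your proof is correct and follows essentially the same route as the paper's: a Hoeffding bound at deviation $\eps/2$ with $S = 2\log(20k/\delta)/\eps^2$ samples, a union bound over the at most $10k$ evaluated values of $n$, and the conclusion that $\hat L_n \le \eps/2$ implies $L(\mathcal{A}_\phi,n)\le\eps$ on the good event. If anything you are slightly more complete than the paper, since you explicitly invoke monotonicity to pass from $L(\mathcal{A}_\phi,\hat m)\le\eps$ to $m_{\eps\mathrm{SC}}\le\hat m$ and handle the case where no update fires, steps the paper leaves implicit.
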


\begin{proof}
By Hoeffding, the probability that $|\hat L_n-L(\mathcal{A}_{\phi},n)| \ge \eps/2$, where $\hat L$ is computed with $S = 2\log(20k/\delta)/\eps^2$ independent draws of $D^n \sim \mathcal{D}^n$ and $(x,y) \sim \mathcal{D}$, is less than $\delta/(10k)$. The algorithm terminates after evaluating $\hat L$ on at most $10k$ different $n$'s. By a union bound, the probability that $|\hat L_n - L(\mathcal{A}_{\phi},n)| \le \eps/2$ for all $n$ used by the algorithm is at least $1-\delta$. Hence, $\hat L_n \le \eps/2$ implies $L(\mathcal{A}_\phi,n) \le \eps$ with probability at least $1-\delta$.
\end{proof}

\section{Experimental details} \label{sec:experiment_details}

In each experiment we first estimate the loss-data curve using a fixed number of dataset sizes $n$ and multiple random seeds, then compute each measure from that curve.
Reported values of SDL correspond to the estimated area between the loss-data curve and the line $y=\eps$ using Riemann sums with the values taken from the left edge of the interval.
This is the same as the chunking procedure of \citet{Voita2020InformationTheoreticPW} and is equivalent to the code length of transmitting each chunk of data using a fixed model and switching models between intervals.
Reported values of $\eps$SC correspond to the first measured $n$ at which the loss is less than $\eps$.

All of the experiments were performed on a single server with 4 NVidia Titan X GPUs, and on this hardware no experiment took longer than an hour.
All of the code for our experiments, as well as that used to generate our plots and tables, is included in the supplement.

\subsection{MNIST experiments}

For our experiments on MNIST, we implement a highly-performant vectorized library in \hyperlink{https://jax.readthedocs.io/en/latest/}{JAX} to construct loss-data curves.
With this implementation it takes about one minute to estimate the loss-data curve with one sample at each of 20 settings of $n$.
We approximate the loss-data curves at 20 settings of $n$ log-uniformly spaced on the interval $[10, 50000]$ and evaluate loss on the test set to approximate the population loss.
At each dataset size $n$ we perform the same number of updates to the model; we experimented with early stopping for smaller $n$ but found that it made no difference on this dataset.
In order to obtain lower-variance estimates of the expected risk at each $n$, we run 8 random seeds for each representation at each dataset size, where each random seed corresponds to a random initialization of the probe network and a random subsample of the evaluation dataset.

Probes consist of two-hidden-layer MLPs with hidden dimension 512 and ReLU activations.
All probes and representations are trained with the Adam optimizer \citep{Kingma2015AdamAM} with learning rate $10^{-4}$.

Each representation is normalized to have zero mean and unit variance before probing to ensure that differences in scaling and centering do not disrupt learning.
The representations of the data we evaluate are implemented as follows.

\paragraph{Raw pixels.}
The raw MNIST pixels are provided by the Pytorch \texttt{datasets} library \citep{Paszke2019PyTorchAI}.
It has dimension $28 \times 28 = 784$.

\paragraph{CIFAR.}
The CIFAR representation is given by the last hidden layer of a convolutional neural network trained on the CIFAR-10 dataset.
This representation has dimension 784 to match the size of the raw pixels.
The network architecture is as follows:

\begin{verbatim}
    nn.Conv2d(1, 32, 3, 1),
    nn.ReLU(),
    nn.MaxPool2d(2),
    nn.Conv2d(32, 64, 3, 1),
    nn.ReLU(),
    nn.MaxPool2d(2),
    nn.Flatten(),
    nn.Linear(1600, 784)
    nn.ReLU()
    nn.Linear(784, 10)
    nn.LogSoftmax()
\end{verbatim}

\paragraph{VAE.}
The VAE (variational autoencoder; \citet{Kingma2014AutoEncodingVB,Rezende2014StochasticBA}) representation is given by a variational autoencoder trained to generate the MNIST digits.
This VAE's latent variable has dimension 8.
We use the mean output of the encoder as the representation of the data.
The network architecture is as follows:
\begin{verbatim}
self.encoder_layers = nn.Sequential(
    nn.Linear(784, 400),
    nn.ReLU(),
    nn.Linear(400, 400),
    nn.ReLU(),
    nn.Linear(400, 400),
    nn.ReLU(),
)
self.mean = nn.Linear(400, 8)
self.variance = nn.Linear(400, 8)

self.decoder_layers = nn.Sequential(
    nn.Linear(8, 400),
    nn.ReLU(),
    nn.Linear(400, 400),
    nn.ReLU(),
    nn.Linear(400, 784),
)
\end{verbatim}

\subsection{Part of speech experiments}

We follow the methodology and use the official code\footnote{\url{https://github.com/lena-voita/description-length-probing}} of \citet{Voita2020InformationTheoreticPW} for our part of speech experiments using ELMo \citep{Peters2018DeepCW} pretrained representations.
In order to obtain lower-variance estimates of the expected risk at each $n$, we run 4 random seeds for each representation at each dataset size, where each random seed corresponds to a random initialization of the probe network and a random subsample of the evaluation dataset.
We approximate the loss-data curves at 10 settings of $n$ log-uniformly spaced on the range of the available data $n \in [10, 10^6]$.
To more precisely estimate $\eps$SC, we perform one recursive grid search step: we space 10 settings over the range which in the first round saw $L(\mathcal{A}_\phi, n)$ transition from above to below $\eps$.

Probes consist of the MLP-2 model of \citet{Hewitt2019DesigningProbes,Voita2020InformationTheoreticPW} and all training parameters are the same as in those works.
\end{appendices}

\end{document}